\newcounter{def_count}
\newtheorem{definition}[def_count]{Definition}
\newcounter{theorem}
\newtheorem{lemma}[theorem]{Lemma}
\newtheorem{theorem}{Theorem}
\newenvironment{compact_enum}{
\begin{itemize}
  \setlength{\itemsep}{0pt}
  \setlength{\parskip}{2pt}
  \setlength{\parsep}{0pt}
  \setlength{\itemindent}{-5pt}
}{\end{itemize}}
\begin{document}

\title{Towards Data Quality Assessment in Online Advertising}

\numberofauthors{1} 
\author{
\alignauthor
Sahin Cem Geyik $^{1,+,*}$, Jianqiang Shen $^{2,*}$, Shahriar Shariat $^{3,+}$, \\Ali Dasdan $^{4,+}$, Santanu Kolay $^2$\\
       \affaddr{$^1$ LinkedIn Corporation, sgeyik@linkedin.com}\\
       \affaddr{$^2$ Turn Inc., \{jianqiang.shen, santanu.kolay\}@turn.com}\\       
       \affaddr{$^3$ Uber, shah.shariat@gmail.com}\\
       \affaddr{$^4$ Vida Health, ali@vida.com}
}
\date{19 May 2016}

\newcommand{\theHalgorithm}{\arabic{algorithm}}
\maketitle

\begin{abstract}
In online advertising\let\thefootnote\relax\footnote{~~~~ \\ $^*$ The authors contributed to this work equally.}\let\thefootnote\relax\footnote{~~~~ \\ $^+$ This work was completed when the authors were at Turn.}, our aim is to match the advertisers with the most relevant users to optimize the campaign performance. In the pursuit of achieving this goal, multiple data sources provided by the advertisers or third-party data providers are utilized to choose the set of users according to the advertisers' targeting criteria. In this paper, we present a framework that can be applied to assess the quality of such data sources in large scale. This framework efficiently evaluates the similarity of a specific data source categorization to that of the ground truth, especially for those cases when the ground truth is accessible only in aggregate, and the user-level information is anonymized or unavailable due to privacy reasons. We propose multiple methodologies within this framework, present some preliminary assessment results, and evaluate how the methodologies compare to each other. We also present two use cases where we can utilize the data quality assessment results: the first use case is targeting specific user categories, and the second one is forecasting the desirable audiences we can reach for an online advertising campaign with pre-set targeting criteria.

\end{abstract}

\category{H.3.5}{Information Storage and Retrieval}{Online Information Services}
\category{I.2.1}{Artificial Intelligence}{Applications and Expert Systems}

\terms{Algorithms, Application}

\keywords{Online Advertising; Data Quality; Targeting; Forecasting}

\section{Introduction}	\label{sec:intro}
Online advertising strives to serve the most beneficial advertisement (ad) to the most relevant online users in the appropriate context (a specific website, mobile application, etc.). This typically results in attaining higher return-on-investment (ROI) for the advertisers \cite{klee_2012}, where the value is generated either from a direct response such as a click or conversion (e.g. the purchase of a product, subscription to a newsletter, etc.), or through delivering a branding message. For this purpose, advertisers receive help from multiple entities in the domain. \emph{Supply-side platforms} (SSP) provide ad-space (inventory) on websites or mobile apps, to serve ad impressions to users. \emph{Ad-exchanges} run auctions on available inventory from SSPs. \emph{Demand-side platforms} (DSP) act on behalf of the advertisers and aim to bid on the most valuable inventory.
Advertisers often get performance reports from an \emph{independent evaluation agency}$~^\#$\let\thefootnote\relax\footnote{~~~~ \\ $^\#$ There are certain independent evaluation agencies in online advertising domain, whose names we cannot list here to comply with the company policy. Advertisers trust these organizations to collect the ground truth.}. For privacy reasons, these reports, in most cases, only contain aggregate metrics (e.g. click-through rate, percentage of female audiences).

In order to reach the right audience usually defined by the advertiser, which in general would improve direct response and branding metrics, the advertisers need to utilize various data sources to label the users in the most accurate way possible. \emph{Data management platforms} (DMP) have been emerging as a central hub to seamlessly collect, integrate and manage large volumes of user data \cite{elmeleegy2013overview}. Such user data could be first-party (i.e. historical user data collected by advertisers in their private customer relationship management systems), or third-party (i.e. data provided by third-party data partners, typically each specializing in a specific domain, e.g., demographics, credit scores, buying intentions). While first-party data is proprietary to the advertiser and free to utilize, third-party data often carries a pre-negotiated cost per impression (ad served to a user in a website or application). In both cases, it is important for the advertiser to know how accurate a data source is. That is, if a data source has tagged a user to be in category $c_i$ (user property, e.g. gender, age, income), how likely it is for the user to \emph{actually} be in that category.

\begin{table}[tb]
\small
\centering
\caption{Confusion matrix of data source $s$ for tagging users with category $c$.}
\begin{tabular}{cc|ccc}
\multicolumn{1}{c}{}                               &         & \multicolumn{3}{c}{Predicted by data source $s$}                          \\ \cline{3-5} 
                                                   &         & \multicolumn{1}{c|}{$c$} & \multicolumn{1}{c|}{not $c$} & Unknown \\ \hline
\multicolumn{1}{c|}{\multirow{8}{*}{\rotatebox[origin=c]{90}{ Ground Truth}}} &         & \multicolumn{1}{l|}{}      & \multicolumn{1}{l|}{}  &         \\
\multicolumn{1}{l|}{}                              & $c$       & \multicolumn{1}{c|}{$n_{+,+}$}     & \multicolumn{1}{c|}{$n_{+,-}$} & $n_{+,\bigcirc}$       \\ \cline{2-5}
\multicolumn{1}{l|}{}                              &         & \multicolumn{1}{l|}{}      & \multicolumn{1}{l|}{}  &         \\
\multicolumn{1}{l|}{}                              & not $c$   & \multicolumn{1}{c|}{$n_{-,+}$}     & \multicolumn{1}{c|}{$n_{-,-}$} & $n_{-,\bigcirc}$       \\ \cline{2-5} 
\multicolumn{1}{l|}{}                              &         & \multicolumn{1}{l|}{}      & \multicolumn{1}{l|}{}  &         \\
\multicolumn{1}{l|}{}                              & Unknown & \multicolumn{1}{c|}{$n_{\bigcirc,+}$}     & \multicolumn{1}{c|}{$n_{\bigcirc,-}$} & $n_{\bigcirc,\bigcirc}$      
\end{tabular}
\label{fig_conf_mat}
\end{table}

In this paper, we are investigating the above problem which we call \emph{data quality assessment} in online advertising. The main issue in evaluating the accuracy of a data source is the lack of ground truth in the user-level granularity. For example, the advertisers, in reality, never have access to the confusion matrix (Table~\ref{fig_conf_mat}) of a data source in either first or third-party cases. Therefore, the only way for an advertiser to evaluate the quality of a data source is to run an advertising campaign on a set of users and then evaluate the performance in hindsight. Even in those cases, the post-campaign data is often constrained (mostly due to privacy concerns) and in aggregate, that is, only the total number of users in different categories is provided, and not a granular user-to-category assignment. If it were possible to have the granular data, it would then be trivial to just use the ground truth data source to come up with the accuracy metrics, e.g. filling in the entries of the confusion matrix. Therefore, utilizing the aggregate performance statistics makes the data quality evaluation task quite challenging, and somewhat similar to aggregate learning tasks in machine learning \cite{cheplygina_2015}, few of which are also directly applicable to this problem.

The main contributions of this work are as follows:
\begin{compact_enum}
\item formal definition of data quality assessment problem, and the challenges of solving it in online advertising domain,
\item multiple approaches for evaluating the quality of a data source, which also take into account the efficiency requirements due to the large number of possible data sources$^*$\let\thefootnote\relax\footnote{~~~~ \\ * While we cannot list the exact number due to the company policy, there are currently over 200k active data sources in Turn's system.} to be evaluated,
\item several use cases where data quality assessment comes in handy for online advertising, and,
\item initial evaluation of our methodology utilizing simulated data and real-world advertising campaigns.
\end{compact_enum}

Rest of the paper is organized as follows. In Section~\ref{sec:problem_definition}, we give a more formal definition of the \emph{data quality assessment} problem. Next, we discuss the literature that deals with either data quality assessment, or aggregate learning (which, as aforementioned, is relevant to our problem) in Section~\ref{sec:previous_work}. We present our two proposed assessment methodologies in Section~\ref{sec:methodology_1} and Section~\ref{sec:methodology_2}, and later give some use cases on how we can utilize our data quality assessment output in Section~\ref{sec:use_cases}. Finally, we present some initial results in Section~\ref{sec:results} and conclude the paper along with some potential future work in Section~\ref{sec:conclusions}.

\section{Research Problems} \label{sec:problem_definition}
As we have explained in the previous section, we seek to evaluate first or third-party data sources available for online advertising using multiple accuracy metrics.
\begin{definition}
A {\normalfont sound data source} tags each virtual user {\normalfont(}cookie ID that might be specific to a browser and device{\normalfont)} with one and only one of the 3 labels -- \{Positive, Negative, Unknown\}. 
\end{definition} 
The tagging process could be explicit or deductive, but cannot be self-contradictory. For example, a user can have a positive tag -- \emph{Age25}, or a negative tag -- \emph{NotAge25}, but cannot be tagged as both \emph{Age25} and \emph{NotAge25}. The data source can also simply indicate that it has no knowledge on a user by tagging it as \emph{Unknown}. In real-time bidding, the positive tags are the most important, as advertisers usually utilize them to target the desirable audiences.
 
The problem of \emph{data quality assessment} is defined as the following:

\begin{definition} \label{def_problem}
Given a sound data source $S$, {\normalfont its data quality assessment} is defined as a measurement $m_{S}$ that has error no more than $\epsilon$ over user examples drawn from user set $U$, with probability of at least $\delta$, 
\[
  P(\Big| \mathbb{E}_{u_i\in U}\big[\Omega(u_i, S_i)\big] - m_{S} \Big| \leq \epsilon) \geq \delta, 
\]
where $\Omega(u_i, S_i)$ is a metric to measure the granular targeting performance when this data source tag user $u_i$ with $S_i$.
\end{definition}

As an example, suppose we have a data source which we utilize to tag a user as \emph{Male} (positive example) or \emph{Not Male} (negative example). Consider two evaluation metrics, which are \emph{Accuracy} (percentage of correct taggings by our data source), and \emph{True Positive Rate} (percentage of positive examples, i.e. Males, that our data source also tags as males). For the accuracy metric, we have the following $\Omega$:
\[
\Omega(u_i, S) = I(GT(u_i) == S_i),
\]
which does an exact comparison of the ground truth tagging of user $u_i$ ($GT(u_i)$) against the tagging by data source $S$ ($S_i$). On the other hand, if we were to calculate true positive rate, then we would have the following $\Omega$:
\[
\Omega(u_i, S_i) = I(GT(u_i) == S_i == \textrm{Male}) ,
\]
which counts only those cases where both ground truth and the data source tag the user as Male.

Note that the above problem definition is a very general formulation, which is typically used in evaluating Machine Learning models \cite{ferri_2009,gunawardana_2009,parker_2011}. As long as both the ground truth category of a user and that of the data source are available, one can come up with a perfect data quality assessment, i.e. $\mathbb{E}_{u_i\in U}\big[\Omega(u_i, S_i)\big] \approx m_{S}$ from Def.~\ref{def_problem}. The problem occurs when we don't have direct access to the ground truth category of every single user.  Typically, $\Omega(u_i, S_i)$ is unknown, but rather the category distribution of groups of users is provided. The main reason is to protect the privacy of users \cite{adam_1989}. In these kind of situations, especially in online advertising, we may utilize a specific data source to make smart advertising decisions to choose the most appropriate set of users, and in the end, we can receive an aggregated report from a third-party evaluator, which is considered as the ground truth and provides a non-granular distribution of the audience we have reached over many categories of interest. As an example, the report may provide that over all users we have 20\% Male, and 80\% Not Male. When this occurs, we can no longer do a one-to-one comparison between ground truth and data source in the user granularity, but rather need to come up with alternative methods that can deal with aggregated data, which is our main focus in this paper.

In many cases, we need to select the best data source from a large set of candidates with the same semantic goal and adopt it for targeting. For example, given a set of data sources that tag users as male, female, or unknown, we may care more about their \emph{relative} performance and less about their absolute measurements. The data quality assessment can then be simplified as a ranking problem:
\begin{definition}
Given two sound data sources $S^1$ and $S^2$, and an accuracy metric $\Omega$, a {\normalfont data quality ranking system} outputs a rank measurement $r^1$ for $S^1$ and $r^2$ for $S^2$ such that\\
$~~~~~ r^1 > r^2 \iff \mathbb{E}_{u_i\in U}\big[ \Omega(u_i, S^1_i) \big] > \mathbb{E}_{u_i\in U}\big[ \Omega(u_i, S^2_i) \big]$.
\label{def_rank}
\end{definition}
Once we have the rank measurements for each sound data source, we can order them and select the best one.

\section{Previous Work} \label{sec:previous_work}
As we have explained in the problem definition, evaluation of a data source can be taken as any other machine learning model evaluation task, provided that we have the ground truth information in the user granularity. A detailed evaluation of 18 performance metrics for classification problems is given in \cite{ferri_2009}. These 18 metrics can be listed as \emph{accuracy}, \emph{kappa statistic}, \emph{mean F-measure}, \emph{macro average arithmetic}, \emph{macro average geometric}, \emph{AUC of each class against the rest} (two variants), \emph{AUC of each class couples} (two variants), \emph{scored AUC}, \emph{probabilistic AUC}, \emph{macro average mean probability rate}, \emph{mean probability rate}, \emph{mean absolute error}, \emph{mean squared error}, \emph{LogLoss}, \emph{calibration loss}, and \emph{calibration by bins}. The paper provides a detailed correlation analysis and noise sensitivity analysis . Also, the survey by Gunawardana et al. \cite{gunawardana_2009} discusses both the evaluation settings and proper evaluation metrics for different classes of recommendation problems, of which online advertising is a sub-problem.

When we only have access to aggregated ground truth data, evaluation of a data source is much harder. There has been significant work in \emph{aggregate learning} tasks which utilize aggregate assignments of classes to groups of samples to train a model. Our aim in this paper is significantly different from such works, since we already have a model (i.e. data source), and we are trying to evaluate its performance utilizing many campaigns and multiple aggregates of ground truth data. Cheplygina et al. \cite{cheplygina_2015} provides an overview of aggregate learning methodologies, which may utilize granular response variables/feature vectors (single instance) or aggregate response variables/feature vectors for groups (multiple instance) to train their models, and later, testing them. Musicant et al. \cite{musicant_2007} utilizes aggregate outputs for the response variables to specialize the training process of \emph{k-nearest neighbors}, \emph{decision trees}, and \emph{support vector machines}. In \cite{chen_2006}, the authors utilize aggregate views of data, which consist of a choice of different combinations of features, response variables, and combining machine learning models learned from these views. Another interesting work is presented in \cite{yu_2015}, which gives error bounds on how a model learned from aggregate data can perform. They assert that a machine learning model should minimize \emph{empirical proportion risk}, and prove that under certain assumptions for the class distributions, learning in the aggregate setting can actually improve individual classification performance.

Finally, specific to the online advertising domain, we can list \cite{williams_2014} as being a relevant work to ours. In this paper, similar to aggregate learning techniques, the authors aim to learn a predictive model to decide whether a user is in a specific ground truth category using the aggregate data over many campaigns, by assigning the most likely label to all users in the aggregate, or assigning a probabilistic single label. They utilize logistic regression with L$_2$-norm regularization, where the response variables are the artificially generated labels.

\section{Brute Force Evaluation}  \label{sec:methodology_1}
In this section we will present our first proposal for data quality assessment, which includes setting up specialized campaigns for a data source and utilizing the targeting results directly for evaluation.

Note that we typically rely on the independent survey agencies to collect the ground truth analysis data on our audience population. Such agencies use offline data (such as credit card information) and online data (such as information filled in social networking websites) to profile an Internet user. Reports from these survey agencies are generally considered as the ground truth by advertisers. Such reports are aggregated statistics and contain no user-level information due to privacy reasons.

\subsection{Performance Campaign for Data Source} \label{subsec:brute_force}
An intuitive and straightforward way to evaluate a data source is to set up a campaign that only targets certain users which are tagged by data source $S$ to be in category $c$. This way we can calculate the quality of the data source as $p(c_g | c_s) = \frac{N(c_g)}{N(c_s)}$, where $N(c_g)$ is the number of users in category $c$ reached by this campaign as reported by the ground truth and $N(c_s)$ is the total number of users reached by the campaign via at least one impression. Note that we can put a limit on the number of impressions to be served to a user so that we can increase the unique user reach and have more reliable results.

\begin{table*}[!t]
\vspace{-8pt}
\centering
{\scriptsize
\caption{Ground truth distributions of anonymous data provider's category taggings.}
\begin{tabular}{c||c|c|c|c|c|c|c|c|c|c}
\textbf{Age Ranges} & $R_{1,g}$ & $R_{2,g}$ & $R_{3,g}$ & $R_{4,g}$ & $R_{5,g}$ & $R_{6,g}$ & $R_{7,g}$ & $R_{8,g}$ & $R_{9,g}$ & $R_{10,g}$ \\ \hline
$R_{1,p}$  & \textbf{0.414}  & 0.245 & 0.033 & 0.018 & 0.012 & 0.032 & 0.043 & 0.064 & 0.075 & 0.03  \\ \hline
$R_{2,p}$  & 0.058  & \textbf{0.297} & 0.246 & 0.037 & 0.021 & 0.043 & 0.051 & 0.089 & 0.091 & 0.046 \\ \hline
$R_{3,p}$  & 0.031  & 0.053 & \textbf{0.298} & 0.227 & 0.049 & 0.042 & 0.042 & 0.049 & 0.132 & 0.054 \\ \hline
$R_{4,p}$  & 0.031  & 0.041 & 0.089 & \textbf{0.345} & 0.182 & 0.057 & 0.037 & 0.039 & 0.116 & 0.041 \\ \hline
$R_{5,p}$  & 0.037  & 0.057 & 0.056 & 0.063 & \textbf{0.337} & 0.212 & 0.047 & 0.038 & 0.082 & 0.05  \\ \hline
$R_{6,p}$  & 0.056  & 0.049 & 0.061 & 0.041 & 0.053 & \textbf{0.339} & 0.23  & 0.041 & 0.065 & 0.046 \\ \hline
$R_{7,p}$  & 0.08   & 0.067 & 0.054 & 0.035 & 0.03  & 0.041 & \textbf{0.332} & 0.204 & 0.077 & 0.048 \\ \hline
$R_{8,p}$  & 0.065  & 0.074 & 0.082 & 0.043 & 0.03  & 0.04  & 0.048 & \textbf{0.339} & 0.203 & 0.052 \\ \hline
$R_{9,p}$  & 0.05   & 0.044 & 0.066 & 0.065 & 0.048 & 0.033 & 0.048 & 0.044 & \textbf{0.488} & 0.101 \\ \hline
$R_{10,p}$ & 0.035  & 0.036 & 0.055 & 0.048 & 0.039 & 0.048 & 0.061 & 0.06  & 0.106 & \textbf{0.492} \\ \hline
\end{tabular}
\vspace{-8pt}
\label{tab:results_for_ot}
}
\end{table*}

We applied this methodology to evaluate age and gender categorizations of some well-established data providers in online advertising. Table~\ref{tab:results_for_ot} demonstrates some results for one of the better performing such data providers in age categorization. We anonymize the name of the data provider and exact age ranges listed in the table to comply with the company's regulations. 
In Table~\ref{tab:results_for_ot}, $R_{1,\cdot}\cdots R_{10,\cdot}$ represent the age ranges such that they are mutually exclusive and sorted in ascending order, e.g. $R_5$ is the range that is the immediate higher range after $R_4$ (i.e. minimum age in range $R_5$ is one larger than maximum age in range $R_4$), and the immediate lower range before $R_6$. $R_{i,p}$ represents the predicted results by the data source for age range $i$, while $R_{i,g}$ stands for ground truth data for the same age range.
For example, we can observe from the table that $p(R_{4,g} | R_{4,p}) = 0.345$, that is, when our data source classifies  the user to be in age range $R_4$, 34.5\% of the time it is correct, or in other words, 34.5\% of the users reached by campaign that targets category $R_4$, as predicted by the data source, were actually in category $R_4$, as provided by ground truth. Note that for this particular data source, the exact match, highlighted in bold, is quite high compared to random.

Data providers utilize various models and online/offline information to tag users.  Please note that data sources from the same data provider may have quite diverse accuracy values. For example, the accuracy results in Table~\ref{tab:results_for_ot} range from 0.29 to 0.49. Thus we cannot evaluate one data source and assume its sibling data sources have similar predictive power. Each data source needs to be evaluated individually. This causes a significant disadvantage with the above methodology, which is the fact that we need to set up a separate campaign for each category so that we can gather the accuracy statistics. To remedy this problem, we propose a second methodology in Section~\ref{sec:methodology_2}, which solves an optimization problem to come up with the best fitting accuracy probabilities based on the aggregate reports.

\subsection{Cost Analysis} \label{subsec_cost_analysis}

In this subsection, we further analyze the cost of the brute force method discussed in Section~\ref{subsec:brute_force}. It must be noted that obtaining the ground truth, that is the aggregated labeled data, is costly on its own right. However, in the following lemma, we focus only on the ad serving costs to underline the utility and benefit of our approach.
\begin{lemma}
\label{lemma_cost}
Given a data source that can tag the user by one of the possible $c$ categories (e.g. the data source gives a positive/negative output on one age group of possible $c$ age groups), then to observe a significant difference between the calculated accuracy of one category versus others, we need\\ at least $\lceil \frac{4202.969}{c}(1-\frac{1}{c}) \rceil$ impressions.
\end{lemma}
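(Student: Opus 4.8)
The plan is to phrase the requirement ``observe a significant difference between the calculated accuracy of one category versus the others'' as a power analysis for a test of a single proportion, and then to recover the constant $4202.969$ from standard choices of significance level, power, and effect size.

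First I would fix the sampling model. When the performance campaign of Section~\ref{subsec:brute_force} serves an impression to a previously unseen user that the data source tagged with a fixed one of the $c$ categories, the ground truth either confirms that tag or not; over $n$ such impressions the number of confirmations is a $\mathrm{Binomial}(n,p)$ random variable, where $p$ is the true accuracy on that category and the estimator of interest is $\hat p = N(c_g)/N(c_s)$. A data source that is no better than chance spreads the $c$ categories uniformly, so its accuracy on any single category equals $p_0 = 1/c$; this is the natural null value against which a ``significant difference'' is to be detected. By the central limit theorem, for moderate $n$ the estimator $\hat p$ is approximately normal with mean $p$ and variance $p(1-p)/n$, which under the null equals $\frac{1}{n}\cdot\frac{1}{c}\bigl(1-\frac{1}{c}\bigr)$.

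Next I would invoke the textbook sample-size formula for a proportion test: to reject $H_0\colon p=1/c$ in favour of $H_1\colon |p-1/c|\ge\epsilon$ at two-sided level $\alpha$ with power $1-\beta$, it suffices that
\[
  n \;\ge\; \frac{\bigl(z_{\alpha/2}+z_{\beta}\bigr)^{2}}{\epsilon^{2}}\,p_0(1-p_0)
      \;=\; \frac{\bigl(z_{\alpha/2}+z_{\beta}\bigr)^{2}}{\epsilon^{2}}\cdot\frac{1}{c}\Bigl(1-\frac{1}{c}\Bigr),
\]
where I replace $p(1-p)$ by $p_0(1-p_0)$; this is legitimate in the small-$\epsilon$ regime and, moreover, conservative at the null in the relevant direction, since $p\mapsto p(1-p)$ is increasing on $[0,\frac{1}{2}]$ and $p_0=1/c\le\frac{1}{2}$ for every $c\ge 2$, so the only interesting departures from chance --- accuracy above $1/c$ --- do not decrease the variance. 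Plugging in the conventional constants (a two-sided test at the $5\%$ level, so $z_{\alpha/2}=z_{0.025}\approx 1.96$; power $90\%$, so $z_{\beta}=z_{0.10}\approx 1.2816$; and a target gap $\epsilon=0.05$ between the data source and chance) makes $(z_{\alpha/2}+z_{\beta})^{2}/\epsilon^{2}\approx 4202.969$, and rounding up, since $n$ is an integer, yields exactly $n\ge\lceil\frac{4202.969}{c}(1-\frac{1}{c})\rceil$.

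The step I expect to demand the most care is making transparent which statistical convention the bare constant $4202.969$ encodes, and defending it: the number is meaningless without committing to $\alpha=0.05$, power $0.90$, and $\epsilon=0.05$, so the write-up must surface those choices rather than bury them inside the constant. A secondary subtlety is the clause ``versus the others'': read literally as a two-sample comparison of category $i$'s accuracy against category $j$'s within a single multinomial row, the variance of the difference acquires the negative covariance term $-p_ip_j/n$, and I would either check that this does not alter the leading-order expression, or --- more cleanly --- argue that comparing each category against the common chance level $1/c$ is the faithful formalization, which is what the stated bound reflects.
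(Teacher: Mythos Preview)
Your proposal is correct and mirrors the paper's proof almost exactly: the paper also models each impression as a Bernoulli trial with success probability $1/c$, invokes the two-sided proportion test at level $\alpha=0.05$ with power $0.90$ and effect size $\epsilon=0.05$, writes the inequality $\frac{0.05\sqrt{n}}{\sqrt{\frac{1}{c}(1-\frac{1}{c})}}>z_{0.975}+z_{0.9}$, and solves for $n$. Your write-up is in fact more explicit about the conventions behind the constant and about the variance-replacement step than the paper's own argument.
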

\begin{proof}
Assuming a uniform distribution, we assume the average on-target rate is $\frac{1}{c}$ for each data source (although the intention of a data source is to increase this value). Each impression can be considered as a Bernoulli trial with $\frac{1}{c}$ probability of success. The sample variance is, thus, $\frac{1}{c}(1-\frac{1}{c})$. We would like to detect a significant difference between the prediction accuracy of the correct category versus the rest of possible tags. The industry standard accepts a $5\%$ error. Then, for a significance level of $\alpha=5\%$ for a two tailed hypothesis test and to attain at least $90\%$ power, we have $\frac{0.05\sqrt{n}}{\sqrt{\frac{1}{c}(1-\frac{1}{c})}}>\left(z_{0.975}+z_{0.9}\right)$, where $n$ stands for the number of users, $z_{0.975}$ and $z_{0.9}$ are the values of the quantile function of the standard normal distribution for $97.5\%$ and $90\%$, respectively \cite{Casella02}. Therefore, the number of users that receive the ad impressions must be more than $\lceil \frac{4202.969}{c}(1-\frac{1}{c}) \rceil$ for each data source.    
\end{proof}
Based on Lemma~\ref{lemma_cost}, for $d$ data sources that provide information on one of the possible $c$ categories, we need at least $d \lceil \frac{4202.969}{c}(1-\frac{1}{c}) \rceil$ impressions. As we discussed before, it is necessary to evaluate each data source individually, considering the diversity of their predictive power. 
Given the very large number of data sources, this causes the brute-force approach to incur a very significant cost.

\section{Accuracy Inference} \label{sec:methodology_2}
High-quality data sources can enable advertisers to reach the right audience at the right moment. Because they have become an important component of online advertising, more and more online/offline data are being ingested into Turn's data management platform. As we have mentioned previously, there are currently over 200k active data sources in our system. Lemma~\ref{lemma_cost} established that explicitly evaluating each of these data sources by running performance campaigns is overwhelmingly costly: not only a large amount of money is required to run the performance campaigns, but also enormous manual effort to set up and manage those campaigns is essential as well. We need an efficient way to simultaneously infer the accuracy of multiple data sources.

As we have presented in Section~\ref{sec:problem_definition}, our focus in this paper is to calculate the accuracy metrics of a data source for single or multiple categories. In essence, we are trying to calculate a set of probabilities, which represent the likelihood of a data source predicting correctly/incorrectly that a user belongs to a category $c_i$. In Figure \ref{fig_basic_objective}, we have shown the set of probabilities that we aim to predict. For representational purposes, we have shown the accuracy probabilities of a data source which denote its capabilities to tag a user as \emph{Male} or not, though the same logic follows for any category. The probabilities in the figure can be listed as follows:
\begin{compact_enum}
\item $\alpha_1$: The probability of the user actually being Male when the data source tags it as Male. This value can also be called \emph{precision} or \emph{positive predictive value}.
\item $\alpha_2$: The probability of the user actually being Not Male when the data source tags it as Male. This value can also be called \emph{false discovery rate}.
\item $\alpha_3$: The probability of the user being Unknown (i.e. ground truth does not exist) when the data source tags it as Male.
\item $\beta_1$: The probability of the user actually being Male when the data source tags it as Not Male.
\item $\beta_2$: The probability of the user actually being Not Male when the data source tags it as Not Male. This value can also be called \emph{negative predictive value}.
\item $\beta_3$: The probability of the user being Unknown (i.e. ground truth does not exist) when the data source tags it as Not Male.
\item $\gamma_1$: The probability of the user actually being Male when the data source tags it as Unknown.
\item $\gamma_2$: The probability of the user actually being Not Male when the data source tags it as Unknown.
\item $\gamma_3$: The probability of the user being Unknown (i.e. ground truth does not exist) when the data source tags it as Unknown.
\end{compact_enum}
As it can be seen from the figure, and trivial from the definitions, we have $\alpha_1 + \alpha_2 + \alpha_3 = \beta_1 + \beta_2 + \beta_3 = \gamma_1 + \gamma_2 + \gamma_3 = 1$. Among these nine variables, $\alpha_1$, i.e. \emph{precision}, is often the most important value for the advertisers, since it denotes the goodness of a data source to be used for their advertising purposes. To calculate this value, we presented the methodology, which is based on creating specific campaigns to evaluate a singular data source for a specific category in Section~\ref{sec:methodology_1}. In this section we are proposing an optimization scheme, which utilizes the aggregated category distributions over multiple campaigns, for both the data source we want to evaluate, and the ground truth. In the following subsections, we call the above nine variables \emph{predictive values} of a data source.

\begin{figure}[!t]
\centering
\includegraphics[width=2.2in]{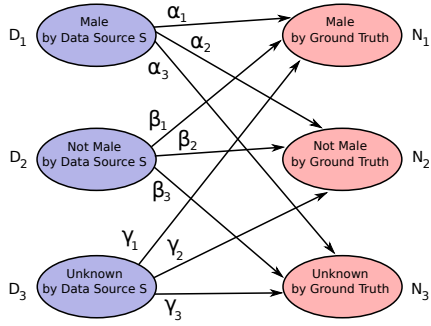}
\caption{Objective probabilities of a data source $S$ for a specific category (\emph{is user Male}).}
\label{fig_basic_objective}
\vspace{-6pt}
\end{figure}

\subsection{Setup for Inference} \label{subsec:methodology_setup_camp}
We propose to set up multiple performance campaigns without using any data source for targeting, so the audience will not be explicitly skewed by any data source. We compare the ground truth of each campaign against the hypothesis of each data source, and infer the quality of the data source.

We follow a set of rules to set up a performance campaign. First, the targeting criteria should be minimal and cannot be biased by any third-party data. For example, targeting the online users in U.S. is fine, since this is purely based on IP address and not biased; however, using a data source to limit audience to \emph{middle-aged men} is not acceptable as the quality of this data source is what we want to assess. In general, only geographical location should be used as targeting criteria. Second, the targeted websites must be discriminative, that is, the population of the visiting users should be largely skewed towards one of our possible tags. This way, we will not mistakenly estimate a data source to be accurate, when in fact it is predicting the label in a random manner. For example, a website is beneficial for such an experiment if 70\% of visitors are female and 30\% of visitors are male, and not necessarily if the distribution is 50\% to 50\% (since, in such a case, a random prediction of female vs. male will closely fit the overall audience in aggregate). One should note that obtaining such knowledge is not always feasible before running the campaign. Therefore, in our system, we target only the websites that are known, based on our domain experience and verified by independent reports,
 to be popular among a certain group of audience. After creating such a campaign, we run it for a certain period to collect data. The ground truth is collected through independent agencies as described in Section~\ref{sec:methodology_1}.

We log the received report data along with the first-party campaign data into our in-house data warehousing system called Cheetah \cite{chen2010cheetah}, which is built on top of the Hadoop framework \cite{hadoop_2012}. Cheetah is designed specifically for our online advertising application to allow various simplifications and custom optimizations. Campaign facts are stored within nested relational data tables. Fast MapReduce jobs are designed to extract key features of the performance campaigns, compare them with the ground truth and infer the accuracy of a data source. Utilizing the collected campaign information, we present two approaches in this section to efficiently infer the quality of data sources: one that ranks data sources, and another which directly deduces precision ($\alpha_1$) of a data source.

\subsection{Ranking Based Assessment} 
\label{subsec:methodology_rank}

{\bf Ranking Data Sources.}
In many instances, we only need to choose the best data source from a large set of candidates with similar semantic purposes. If this is the case, then data quality assessment becomes a ranking problem. A data source's absolute precision $\alpha_1$ is of less importance then, and rather its rank among others is critical.  

Since the independent evaluation agency sends us the aggregate statistics on a campaign, we can similarly construct such statistics using a data source. Note that this approach will only represent the view of the data source, and not the ground truth, unlike the independent agency case. We can then evaluate this data source based on how close the constructed statistics are to that of the ground truth, and therefore rank data sources based on such closeness measure. This logic is presented in Algorithm~\ref{alg_rank}.

\begin{algorithm}[]
 \KwIn{$\psi$: aggregation function, $f$: closeness function} 
 \KwIn{$S$: data source}
 \KwOut{$\bar{\mu}$: score for quality of data source $S$} 
  \ForEach{ \normalfont{performance campaign} $\mathcal{C}$ }{ 
     $\mathcal{U}$ $\leftarrow$ Retrieve audience of $\mathcal{C}$\;      
  	 $\hat{R}$ $\leftarrow$ $\psi(\mathcal{U}, S)$\; 
  	 ${R}$ $\leftarrow$ Retrieve the ground truth report\;
  	 $\mu_{\mathcal{C}}$  $\leftarrow$ $f(\hat{R}, R)$\;
  }
  $\bar{\mu}$ $\leftarrow$ Calculate the average value of $\mu_{\mathcal{C}}$\;
  \Return $\bar{\mu}$\;
 \caption{Measurement calculation for ranking.}
 \label{alg_rank}
\end{algorithm}

There are multiple ways to design the closeness function $f$ to compare two aggregated statistics. Since the positive taggings are the most valuable for online advertising purposes, we propose to compare the positive population distributions between the ground truth and the data source. We define the percentage of population marked as positive by a data source as 
\begin{align}
\hat{R} = \frac{count_{+}(S, \mathcal{U}) } { count_{+}(S, \mathcal{U}) +count_{-}(S, \mathcal{U}) },
\end{align}
where $count_{+}$ counts the number of users in $\mathcal{U}$ marked as positive by $S$, and $count_{-}$ counts negatives. Given that $R$ is the ground truth ratio of positive population, a simple way to calculate the closeness can be defined as:
\begin{align}
\big| \hat{R} - R \big|.
\end{align}
However, this does not consider the scale of $\hat{R}$ or $R$, which are usually quite small for a rare positive group. To make the measurements more comparable, instead we propose to calculate the relative error as the closeness:
\[
RelativeErr = \frac{\big| count_{+}(\mathcal{U}) - \hat{count}_{+}(S, \mathcal{U}) \big|} { count_{+}(\mathcal{U}) },
\]
where $count_{+}(\mathcal{U})$ is the number of positives reported by the ground truth, 
$count_{-}(\mathcal{U})$ is the number of ground truth negatives, and $\hat{count}_{+}(S, \mathcal{U})$ is the scaled number of positives marked by the given data source.
We want to scale the number of positives of the data source, because the population recognized by $S$ might be quite different from the one recognized by the ground truth. For example, the independent evaluation agency might have data on 10000 users, while $S$ might only have data on 1000 users. Therefore, we need to extrapolate the population unrecognized by $S$ to scale up the populations:
\begin{align}
RelativeErr & = \frac{\big| count_{+}(\mathcal{U}) - \hat{R}\cdot ( count_{+}(\mathcal{U}) + count_{-}(\mathcal{U})) \big|} { count_{+}(\mathcal{U}) } \nonumber \\
 & = \big|1 - \frac{ \hat{R} } { count_{+}(\mathcal{U}) / ( count_{+}(\mathcal{U}) + count_{-}(\mathcal{U})) } \big| \nonumber \\
 & = \big|1 - \frac{ \hat{R} } { R } \big|
  =  \frac{ \big| R - \hat{R} \big| }  { R }
\label{relative_err_form}
\end{align}

~

The above value reflects the potential error rate if we scale the data source's recognizable population to the size of the ground truth population. Per Algorithm~\ref{alg_rank}, we calculate average relative error ($RelativeErr$) across all performance campaigns for each data source. We can then rank data sources based on their average relative errors.

{\bf Soundness Analysis.}
A ranking algorithm needs to be sound to ensure the optimal assessment: Given two data sources $S^1$ and $S^2$, a ranking algorithm is \emph{sound} if it outputs measurements $r^1$ for $S^1$ and $r^2$ for $S^2$, such that $r^1 < r^2$ if and only if $S^1$ is more likely to perform better than $S^2$, as we defined in Def.~\ref{def_rank}. We will show that the $RelativeErr$ based ranking algorithm is sound in many cases. First, let us define the notion of \emph{unbiasedness} for a data source:
\begin{definition}
A data source $S$ is {\normalfont unbiased} if and only if its positive predictive value equals to its negative predictive value: $\alpha_1 \approx \beta_2$. 
\end{definition}
Our experience suggests that many data sources we utilize are on demographics and can be considered as \emph{unbiased}. For example, the accuracy that a data source claims someone as male, in general, is close to the accuracy that it claims someone as female (\emph{Not Male} as in Figure~\ref{fig_basic_objective}). In real-time bidding, better \emph{on-target metrics}, i.e. improving the ratio of audience that really have the data source's claimed characteristics, is the endeavor of any data provider. We can show that the $RelativeErr$ based ranking algorithm is sound:
\begin{lemma}
\label{lemma_rank_correct}
Given a set of sound data sources $\langle S^1, .., S^k \rangle$, the $RelativeErr$ based ranking algorithm is sound for precisions and orders the data sources based on their expected performance. In addition, if the data sources are all unbiased, the algorithm is sound for any on-target metrics.
\end{lemma}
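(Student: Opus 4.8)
The plan is to reduce $RelativeErr$ on a single campaign to an explicit function of the data source's predictive values, show that function is monotone in the precision $\alpha_1$, and then, for the unbiased case, transfer that monotonicity to an arbitrary on-target metric. To start, fix one performance campaign with ground-truth positive ratio $R$ and data-source positive ratio $\hat{R}$. After the extrapolation that produced the relative-error identity~(\ref{relative_err_form}), the scaled population splits according to $S$ into a fraction $\hat{R}$ tagged positive and $1-\hat{R}$ tagged negative; counting the truly positive users in each bucket gives the accounting identity $R = \alpha_1\hat{R} + \beta_1(1-\hat{R})$, where $\beta_1$ is the probability that a user tagged negative by $S$ is actually positive. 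Solving for $\hat{R}$ and substituting into $RelativeErr = |R-\hat{R}|/R$ yields $RelativeErr = \frac{|\beta_1(1-R) - (1-\alpha_1)R|}{(\alpha_1-\beta_1)R}$, valid when $\alpha_1 > \beta_1$, which holds for any source whose positive tag is more indicative of a true positive than its negative tag.

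For the precision claim I would invoke the campaign-selection rule of Section~\ref{subsec:methodology_setup_camp}: campaigns are deliberately skewed, so $R$ is bounded away from the ratio a purely random source reproduces, and the $\beta_1(1-R)$ term is of lower order when the targeted positive group is rare or the campaign is strongly discriminative toward positives. In that regime $RelativeErr$ collapses to a quantity strictly decreasing in $\alpha_1$ — in the cleanest case simply $\frac{1-\alpha_1}{\alpha_1}$, independent of the particular campaign — so averaging over campaigns preserves the order. Hence $r^i < r^j$ if and only if $\alpha_1^i > \alpha_1^j$, i.e. if and only if the expected precision of $S^i$ exceeds that of $S^j$, which is soundness for precision in the sense of Def.~\ref{def_rank}.

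For the second assertion, unbiasedness ($\alpha_1\approx\beta_2$, hence $\alpha_2\approx\beta_1\approx 1-\alpha_1$ once the Unknown mass is set aside) turns the column-normalized confusion matrix into a symmetric noisy channel whose only free parameter is $\alpha_1$, with larger $\alpha_1$ meaning a strictly less noisy channel. I would then verify that every on-target metric $\Omega$ of interest — accuracy (which here equals $\alpha_1$), true positive rate, F-measure, and so on — is a strictly increasing function of $\alpha_1$ for such channels, so its campaign-averaged expectation is also monotone in $\alpha_1$; chaining this with the monotonicity of $RelativeErr$ in $\alpha_1$ from the previous paragraph shows the $RelativeErr$ ranking coincides with the ranking by $\mathbb{E}_{u}[\Omega(u,S_i)]$ for every such metric.

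The main obstacle is the first reduction and its honest use: the raw expression for $RelativeErr$ depends not only on $\alpha_1$ but also on $\beta_1$, on $S$'s predicted-positive ratio, and on the per-campaign base rates, so genuine soundness requires arguing that comparing two sources is, to the order that matters, comparing their $\alpha_1$'s alone. Pinning down the regime in which the $\beta_1$- and campaign-dependent terms are negligible — and, for the general-metric part, checking monotonicity in $\alpha_1$ uniformly across all metrics one cares about — is the delicate step, and it is exactly there that the discriminative-campaign rule and the unbiasedness hypothesis do the work.
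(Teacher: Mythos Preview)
Your approach is substantially more detailed than the paper's and takes a genuinely different route. The paper's proof is essentially a two-line assertion: it claims without derivation that ``per definition'' the more precise source has $\hat R$ closer to $R$ (hence smaller $|R-\hat R|$, hence smaller $RelativeErr$ since $R$ is fixed per campaign), and for the second claim it restricts ``on-target metrics'' to precision, negative predictive value, and their micro/macro averages, observing that unbiasedness makes the NPV ordering coincide with the precision ordering and that averaging is monotone. You instead derive $RelativeErr$ explicitly from the accounting identity $R=\alpha_1\hat R+\beta_1(1-\hat R)$ and argue monotonicity in $\alpha_1$ from that formula. What your route buys is honesty: you surface the real obstruction --- the dependence on $\beta_1$ and on the campaign base rate $R$ --- that the paper's one-line argument silently skips, and you correctly locate the discriminative-campaign rule and the unbiasedness hypothesis as the assumptions that tame it. In the unbiased case your formula factors as $\tfrac{1-\alpha_1}{2\alpha_1-1}\cdot\tfrac{|1-2R|}{R}$, so the campaign average is genuinely monotone in $\alpha_1$ and your plan closes cleanly there; in the non-unbiased precision case the step you flag as ``delicate'' is indeed not fully closed, but the paper's proof has the same gap without acknowledging it. One minor mismatch: the paper's second claim is narrower than the version you set out to prove, since it takes on-target metrics to mean only precision, NPV, and their averages, not the broader list (accuracy, TPR, F-measure) you address.
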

\begin{proof}
Per definition, $\hat{R}$ of the better data source is closer to the reality, thus $\big| R- \hat{R} \big| $ is smaller. Since $R$ is constant, the order of $RelativeErr$ is preserved for precisions. 
On-target metrics can be the precision, or the negative predictive value, or simply the micro or macro average of the two predictive values. Since the data sources are unbiased, the order of metrics for negatives are also preserved. Averaging is monotonic, therefore we can expand the previous statements to micro and macro averaging cases as well. 
\end{proof}

\subsection{Precision Inference Approach} 
\label{subsec:methodology_infer}

Although the ranking methodology is able to pinpoint the highest performing data sources, the output ranking measurement is only a surrogate of precision. It correlates with the underlying precision, but is inherently different. As we will show later, in online advertising, it is often necessary to forecast the campaign performance as well as evaluate whether a third-party data source is worth the extra amount of money that an advertiser has to pay in order to utilize it. In such cases we need an accurate estimation of a data source's precision.

{\bf Direct Inference of Data Source Precision.}
We propose an efficient way to directly estimate the predictive values of a data source. As shown in Figure~\ref{fig_basic_objective}, a data source's hypothesis on the audience population can be mapped into the ground truth using its predictive values. Given a performance campaign $\mathcal{C}^i$, let the size of \emph{Positive}, \emph{Negative} and \emph{Unknown} audiences identified by data source $S$ be $D^i_+$, $D^i_-$ and $D^i_\bigcirc$ correspondingly (these are scaled values to cover the whole population of $\mathcal{C}^i$), and the size of ground truth \emph{Positive}, \emph{Negative} and \emph{Unknown} audiences be $G^i_+$, $G^i_-$ and $G^i_\bigcirc$ correspondingly. When the audience population size is large, it is clear that we have
\begin{align*}
G^i_+ \approx D^i_+\cdot \alpha_1 + D^i_- \cdot \beta_1 + D^i_\bigcirc \cdot \gamma_1 \\
G^i_- \approx D^i_+\cdot \alpha_2 + D^i_- \cdot \beta_2 + D^i_\bigcirc \cdot \gamma_2 \\
G^i_\bigcirc \approx D^i_+\cdot \alpha_3 + D^i_- \cdot \beta_3 + D^i_\bigcirc \cdot \gamma_3 
\end{align*}
Combining with the probability simplex constraint and the unbiasedness constraint, we can estimate a data source's predictive values by solving a quadratic optimization problem: given $n$ performance campaigns $\langle \mathcal{C}^1, \mathcal{C}^2, .., \mathcal{C}^n \rangle$, we search for predictive values $\alpha_1, \alpha_2, \alpha_3, \beta_1, \beta_2, \beta_3, \gamma_1, \gamma_2, \gamma_3$ so that 
\begin{align}
\min_{\alpha, \beta, \gamma} & \sum_{i=1}^{n} \Big( (D^i_+\cdot \alpha_1 + D^i_- \cdot \beta_1 + D^i_\bigcirc \cdot \gamma_1 - G^i_+)^2  \nonumber \\
+ & (D^i_+\cdot \alpha_2 + D^i_- \cdot \beta_2 + D^i_\bigcirc \cdot \gamma_2 - G^i_-)^2 \nonumber \\
+ & (D^i_+\cdot \alpha_3 + D^i_- \cdot \beta_3 + D^i_\bigcirc \cdot \gamma_3 - G^i_\bigcirc)^2 \Big) \label{rank_obj} \\
s.t.~~
& 0\leq \alpha_j, \beta_j, \gamma_j \leq 1 ~~~ \forall j=1, 2, 3 \label{rank_noneg} \\
& \sum_{j=1}^{3}\alpha_j=1,~~ \sum_{j=1}^{3}\beta_j=1,~~ \sum_{j=1}^{3}\gamma_j=1 \label{rank_simplex} \\
& -\xi \leq \alpha_1 - \beta_2 \leq \xi \label{rank_unbiased}~.
\end{align} 
Here, (\ref{rank_obj}) is our optimization objective which aims to find the best mapping between the data source's hypothesis and the ground truth. Note that here we assume the size of audiences of campaigns $\mathcal{C}^i$ to be similar, which can be controlled at campaign set up time. Otherwise we need to normalize by the audience size of each campaign. (\ref{rank_noneg}) and (\ref{rank_simplex}) enforce the probability simplex. (\ref{rank_unbiased}) attempts to help us find the unbiased solution, and predefined constant $\xi$ controls our confidence on the unbiasedness of the predictive values.

We, therefore, can run a few performance campaigns, extract each data source's hypothesis on those campaigns, compare with the ground truth and solve the above optimization problem. As we will show, this will efficiently give us the estimated predictive values of data sources in batch (among those, \emph{precision} is the most valuable for online advertising).

{\bf Performance Analysis.}
The proposed inference approach is efficient, in terms of both computation complexity and money. First, it is straightforward to show that the quadratic programming problem has a semi-definite Hessian with a bowl shape. The optimization problem is convex and can be solved efficiently with polynomial time complexity. Additionally, we only need to run a limited number of performance campaigns to simultaneously estimate the predictive values of multiple data sources. In practice, it is possible that a data source's predictive values are slightly different in different performance campaigns due to variance. Given a campaign $\mathcal{C}^i$, it is natural to assume a data source's predictive values for this specific campaign are
\begin{align*}
\alpha_j^i = \alpha_j + \varepsilon_j^1, ~~\forall j= 1, 2, 3,  \\
\beta_j^i = \beta_j + \varepsilon_j^2, ~~\forall j= 1, 2, 3,  \\
\gamma_j^i = \gamma_j + \varepsilon_j^3, ~~\forall j= 1, 2, 3, 
\end{align*}
where $\varepsilon$ is normally distributed with zero mean. In such cases, we can get the unbiased estimate of a data source's predictive values by running a limited number of performance campaigns:
\begin{theorem}
Given $k \geq 3$ performance campaigns, our direct inference method can get the unique and unbiased estimate of a data source's predictive values. Furthermore, given any predictive value $\alpha_i$ and its estimation $\hat{\alpha_i}$, we have 
\[
P(\Big| \alpha_i -  \hat{\alpha_i} \Big| \leq s_{i} \cdot T_{\delta/2, 2k-6}) \geq 1-\delta,
\]
where $0 < \delta < 1$ is a constant, $s_{i}$ is the standard error of the estimation, and $T_{\delta/2, 2k-6}$ is $(1-\delta/2)$-th quantile of Student Distribution with $2k-6$ degrees of freedom. 
\end{theorem}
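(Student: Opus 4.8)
The plan is to strip the constrained quadratic program (\ref{rank_obj})--(\ref{rank_unbiased}) down to an ordinary linear least-squares problem and then apply the classical Gaussian theory of regression coefficients. First I would use the simplex equalities (\ref{rank_simplex}) to eliminate $\alpha_3,\beta_3,\gamma_3$, and note that under the scaling convention that makes $D^i_+ + D^i_- + D^i_\bigcirc = G^i_+ + G^i_- + G^i_\bigcirc$, the third residual per campaign (the one targeting $G^i_\bigcirc$) is an exact linear combination of the first two and may be discarded. What is left, per campaign $\mathcal{C}^i$, is one residual linear in $(\alpha_1,\beta_1,\gamma_1)$ with coefficient row $(D^i_+,D^i_-,D^i_\bigcirc)$ and one residual linear in $(\alpha_2,\beta_2,\gamma_2)$ with the same row. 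Stacking over $i = 1,\dots,k$ yields $b = A\theta + u$ with $\theta\in\mathbb{R}^6$ and $A\in\mathbb{R}^{2k\times 6}$; since $A$ is block diagonal in the two triples, the objective in fact splits into two independent $3$-parameter regressions, coupled only through the soft band (\ref{rank_unbiased}).

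For the uniqueness claim I would invoke the campaign-design rules of Section~\ref{subsec:methodology_setup_camp}: because the performance campaigns are deliberately discriminative and heterogeneous, the $k\times 3$ matrix with rows $(D^i_+,D^i_-,D^i_\bigcirc)$ is in general position and has rank $3$ once $k\ge 3$. Hence $A^\top A\succ 0$, the objective's Hessian $2A^\top A$ is positive definite (this is the ``semi-definite Hessian with a bowl shape'' already noted), the inequality constraints (\ref{rank_noneg}) and the band (\ref{rank_unbiased}) are inactive at the optimum for a well-behaved interior, unbiased data source, and the unconstrained minimizer $\hat\theta = (A^\top A)^{-1}A^\top b$ is the unique solution.

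Unbiasedness is then immediate: under the perturbation model, the induced error $u_i$ is a fixed linear combination of the zero-mean $\varepsilon$'s, so $\mathbb{E}[u] = 0$ and $\mathbb{E}[\hat\theta] = \theta + (A^\top A)^{-1}A^\top\mathbb{E}[u] = \theta$. For the confidence bound I would add the standard assumption that, campaign sizes having been equalized at set-up time, the $u_i$ are i.i.d.\ $N(0,\sigma^2)$. Classical least-squares inference (e.g.\ \cite{Casella02}) then gives $\hat\theta\sim N(\theta,\sigma^2(A^\top A)^{-1})$ independent of $\mathrm{RSS}/\sigma^2\sim\chi^2_{2k-6}$, the residual degrees of freedom being $2k$ effective equations minus $6$ free parameters, i.e.\ $2(k-3)$ split over the two blocks. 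Setting $\hat\sigma^2 = \mathrm{RSS}/(2k-6)$ and $s_i^2 = \hat\sigma^2\,[(A^\top A)^{-1}]_{ii}$, the ratio $(\hat\alpha_i - \alpha_i)/s_i$ is a Student variable with $2k-6$ degrees of freedom, which yields exactly the stated inequality.

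I expect the main obstacle to be the reduction bookkeeping that turns the nine-variable constrained QP into a genuine $2k\times 6$, full-rank OLS problem with clean i.i.d.\ Gaussian errors. In particular: (i) showing the $G^i_\bigcirc$ equations are truly redundant rather than merely highly correlated (otherwise the degrees of freedom would be $3k-6$, not $2k-6$); (ii) justifying that the simplex bounds and the $\xi$-band do not bind at the optimum, so that unconstrained distribution theory is legitimate; and (iii) arguing that the induced noise is homoscedastic and independent across equations and campaigns, which is only approximately true and rests on the campaign-design assumptions --- without it one would face generalized least squares and a Satterthwaite-style approximate $t$ rather than the exact $t_{2k-6}$ claimed.
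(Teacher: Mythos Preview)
Your approach is essentially the same as the paper's: reduce the constrained quadratic program to a linear regression with six free parameters and two effective observations per campaign, invoke positive-definiteness of the Hessian for $k\ge 3$ to get uniqueness, read off unbiasedness from the zero-mean error assumption, and apply the standard $t_{2k-6}$ pivot for the confidence statement. The paper's own proof is a terse sketch of exactly this outline; your version is more careful about the bookkeeping (redundancy of the third residual, inactivity of the inequality constraints, homoscedasticity) than the paper itself, so there is no gap to flag.
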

\begin{proof}
The optimization can be converted into a linear regression problem within a simplex search space. This regression problem contains 6 free regressors and each campaign provides 2 points in the space. When we have  $k\geq 3$ campaigns, the quadratic matrix is positive-definite and we will have a unique global optimal solution. A Bias-Variance-Noise decomposition shows the solution is unbiased. 

Since the errors are normally distributed, the sum of the regression residuals is then distributed proportional to Student Distribution with $2k-6$ degrees of freedom:
\[
t=(\alpha_i -  \hat{\alpha_i})/s_{i} \sim T_{2k-6}
\]
We then construct the confidence levels for the estimated regressors.
\end{proof}
By running more campaigns, we can quickly reduce the estimation errors and get highly reliable predictive value estimations of multiple data sources. Given its computational and economic efficiency, we adopted the direct inference method and utilize it continuously to generate the quality report on data sources.

\section{Use Cases} \label{sec:use_cases}
In this section, we will discuss some use cases where the quality assessment of first or third-party data sources can be useful. First we will talk about targeting in online advertising, and the amount that an advertiser should be willing to pay for a data source. Then we will give a very general use case in campaign forecasting, i.e. to predict, before an online advertising campaign starts, what category of users will actually be reached by a pre-set targeting criteria.

\subsection{Targeting in Online Advertising} \label{subsec:targeting_use_case}
Advertisers aim to reach the best audiences to promote their products, so that they can increase the likelihood of a click or an action happening. The automated way of grouping users into beneficial and non-beneficial subsets is often called audience segmentation. For an informative work on how this kind of audience segmentation can improve click rates, refer to \cite{yan_2009}. In this paper, however, we focus on a different kind of targeting where the advertisers already have a pre-defined set of users they want to target. As an example, suppose that an advertiser wants to reach only female audiences within the age range 21-35. There are multiple data sources this advertiser can utilize to reach this group, but as discussed in this paper, none of these data sources gives a definitive classification. Intuitively, an accurate prediction of the quality of a data source is essential for advertisers to choose it over others. Also, note that here we mostly care about the \emph{precision} or \emph{positive predictive value} of a data source (i.e. $\alpha_1$, when a data source suggests that a user is in category $c$, the likelihood that this user actually belongs to category $c$), since this is the signal that the advertiser uses to bid on a user.

Here, we would also like to discuss the consideration of \emph{data cost}. In general, when an advertiser wants to utilize a third-party data source for bidding purposes, it should pay the third-party provider a certain amount of money. This cost is generally per impression served using this data source, hence can have significant effect on the ROI of an advertiser (i.e. advertiser needs to pick up extra clicks/conversions to make up for the money paid to the third-party for the targeting information it provides). An important point for an advertiser to consider is if using a data contract is ``worth'' its price. We will give a simple calculation here for the case when the advertiser utilizes no data sources to reach a specific audience (i.e. free targeting), and whether adding the data source and paying for it makes sense. Our main argument is that, by paying for the data source to target a specific audience, the reduced cost of the mis-targeted impressions (i.e. those impressions that are served to the audience that are out of our desired audience) should make up for the data cost. In other words, for the same amount of money we should get more of the desired impressions, although our total number of impressions is less due to data cost. Please note that below we assume the effective cost per impression (cpi) to be the same for both free targeting and data source assisted targeting, just that the data source has the additional data cost per impression (cpi$_{data}$):
\[
\frac{\textrm{totalSpend}}{\textrm{cpi}+\textrm{cpi}_{\textrm{data}}} (1-\textrm{errorRate(dataSource)}) \geq ~~~~~~~~~~~~
\]
\vspace{-12pt}
\[
~~~~~~~~~~~~~~~~~~~~~~~~~~~~~~  \frac{\textrm{totalSpend}}{\textrm{cpi}} (1-\textrm{errorRate(freeTargeting)})
\]
\[
\frac{\alpha_{1,\textrm{data}}}{\textrm{cpi}+\textrm{cpi}_{\textrm{data}}} \geq \frac{\alpha_{1,\textrm{freeTargeting}}}{\textrm{cpi}} ~~ .  ~~~~~~~~~~~~~~~~~~~~~~~~~
\]
Above, \emph{totalSpend} is the amount of money that the campaign spends, \emph{cpi} is effective cost per impression, and \emph{cpi}$_{data}$ is data cost per impression (hence $\frac{\textrm{totalSpend}}{\textrm{cpi}+\textrm{cpi}_{\textrm{data}}}$ is the number of impressions picked up by data source assisted targeting, and $\frac{\textrm{totalSpend}}{\textrm{cpi}}$ is the number of impressions that can be picked via free targeting, i.e. no data cost). errorRate is indeed the inaccuracy of free targeting (percentage of audience that is not desired), and percentage of the cases when the data source predicts a user to be in desired audience, while, in fact, it is not. In the next inequality, we actually translated (1 - errorRate) into $\alpha_1$ from Section~\ref{sec:methodology_2}. After further reorganizing the above inequality, we get the following:
\begin{equation}
\label{eq_max_data_cost}
\textrm{cpi}_{\textrm{data}} \leq \textrm{cpi} \times \left( \frac{\alpha_{1,\textrm{data}}}{\alpha_{1,\textrm{freeTargeting}}} - 1 \right)  ~~ .
\end{equation}
This means that for a data source to be beneficial for a campaign, its data cost per impression should be less than $\textrm{cpi} \times \left( \frac{\alpha_{1,\textrm{data}}}{\alpha_{1,\textrm{freeTargeting}}} - 1 \right)$. Please note that we have the assumption here that effective cost per impression would be the same for free targeting vs. data source which is not always valid, i.e. we may have to pay more to show ads (impression) to those users that the data source tagged to be desirable. Also, it can be seen that the benefit of the data source often depends on how expensive the impressions are for a campaign, hence is campaign specific. Finally, the above calculations do not take into account the cost of data evaluation utilizing our proposed two methodologies, which was also mentioned in Section~\ref{subsec_cost_analysis}. However, this evaluation can be performed once for each data source, and hence is not of significance for each campaign that utilizes it.

\subsection{Forecasting} \label{subsec:forecasting_use_case}
Forecasting the performance (return-on-investment), reach (unique users we can show an ad to), and delivery (amount of money we can spend on advertising given the targeting criteria) of a campaign is a significant problem that has to be dealt in online advertising \cite{jalali_2013}. Here we show that by utilizing the accuracy metrics (i.e. $\alpha_{1 \rightarrow 3}$, $\beta_{1 \rightarrow 3}$, and $\gamma_{1 \rightarrow 3}$ from Section~\ref{sec:methodology_2}) over multiple data sources that may tag a user, we can actually predict the expected number of users that will fall into a specific audience/category, on top of the total spend/reach as in the traditional forecasting problem, once the advertising campaign goes live.

Here is how the forecasting process in online advertising works. Once the advertiser sets some targeting criteria (filtering of users to show ads according to anonymous user properties) and goals (in terms of clicks and conversions) for a campaign, we can utilize our system as explained in \cite{jalali_2013} to find out which users this campaign is likely to reach. We can already calculate expected number of unique users and delivery for the campaign from this information alone. Furthermore, one problem that we can solve for the advertisers is the prediction of what percentage of these users will fall into a specific user category/class $c$. Note that the approach mentioned in \cite{williams_2014} does work on this problem of predicting likelihood of a user belonging to a specific category via utilizing many features, but their focus is on targeting rather than forecasting. Here, we suggest that rather than training a simple model for predicting the membership in a category, we can utilize multiple data sources and their estimated predictive values to forecast the \emph{expected} number of users that will fall into a category. This information is not used in bidding time, hence contrary to the targeting use case we explained in Section~\ref{subsec:targeting_use_case}, there is no data cost.

\begin{figure}[htb]
\centering
\includegraphics[width=2.8in]{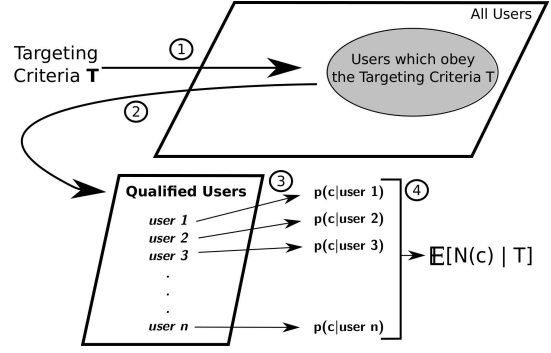}
\caption{Forecasting Example.}
\label{fig_forecasting_example}
\end{figure}

Figure \ref{fig_forecasting_example} summarizes the overall idea. In the first step, we communicate the targeting criteria set by the advertiser to our set of users. For real-time forecasting, we often need to use a sampled set of users \cite{jalali_2013}. Once we filter the users that are appropriate for the targeting criteria, we can go through each of these users and see whether these users are tagged by a first or third-party category $c_d$. Once we see these taggings, we can calculate the probability of this user belonging to a desired ground-truth category $c_g$ by the probability $p(c_g|c_d)$ which is the $\alpha_1$ from Section~\ref{sec:methodology_2}, i.e. precision. If we assume that each user is tagged by one and only one $c_i$, we can \emph{forecast} the expected number of users that will belong to category $c_g$ as:
\begin{equation}
\mathbb{E}[~|c_g|~] = \sum_{u \in T} p(c_g | u) = \sum_{u \in T} \sum_{c_i \in C} I(u \textrm{ has } c_i) ~ p(c_g | c_i) ~ .
\label{eq_exp_in_cat}
\end{equation}
In (\ref{eq_exp_in_cat}), $T$ is the set of users that belong to a certain set of targeting criteria T. $c_g$ is the category that the advertiser desires to forecast how many of their targeted users will belong to. $c_i$ is a first/third-party category from a list of categories $C$ for which we have the prediction values. $I$ is an indicator to see whether a user has category $c_i$, and above formula is valid only because we assume that each user has only one of possible $c_i$s. If each user can have multiple first/third-party categories (as is the case in real situations), we need to aggregate multiple $p(c_g|c_i)$s, where we can utilize combination methods such as getting the maximum, minimum, average or median of the precision values.

\section{Experimental Results} \label{sec:results}
In this section we will give some preliminary results for our optimization-based evaluation technique. We have already given some preliminary results for our first methodology (Section~\ref{subsec:methodology_setup_camp}) in Table \ref{tab:results_for_ot}. As aforementioned, we do aim to calculate all nine prediction values of a data source for a category, but for purposes of online advertising, the most important one is the precision ($\alpha_1$). Only if this value is high we can reliably use this data source to reach a certain category of users.

{\bf Simulation Results}.
To evaluate our methodology we ran several simulated campaigns, where for each campaign we create an audience of 100 users and assign them to \emph{predicted} categories as follows:
\begin{itemize}
\item Random, disjoint sets of 20 users each are assigned to category \emph{c}, \emph{not c}, and \emph{unknown},
\item The rest 40 users are assigned to either category \emph{c}, \emph{not c}, and \emph{unknown} in a uniform manner.
\end{itemize}
Then, we generate the ground truth categories for two types of data sources with the following \emph{actual} probability values:
\begin{itemize}
\item \emph{High Quality:} This data set has the following underlying nine probability values: $\alpha_{1\rightarrow3}$ = (0.8, 0.15, 0.05), $\beta_{1\rightarrow3}$ = (0.2, 0.7, 0.1), $\gamma_{1\rightarrow3}$ = (0.4, 0.5, 0.1), note that $\sum_{1\rightarrow3}\alpha_i = \sum_{1\rightarrow3}\beta_i = \sum_{1\rightarrow3}\gamma_i = 1$,
\item \emph{Low Quality:} This data set has the following underlying nine probability values: $\alpha_{1\rightarrow3}$ = (0.4, 0.5, 0.1), $\beta_{1\rightarrow3}$ = (0.3, 0.6, 0.1), and $\gamma_{1\rightarrow3}$ = (0.5, 0.4, 0.1).
\end{itemize}
Please note that this kind of synthetic data generation is quite counter-intuitive. We first create the predicted values using some pre-set distribution, and then generate these users' actual categories using the predictive values of the two data sources. For example, if the user in our synthetically generated audience has a predicted category of \emph{not c} by \emph{High Quality} data source, then we assign it to ground truth category of \emph{c} by probability $\beta_1=0.2$, \emph{not c} by probability $\beta_2=0.7$, and \emph{unknown} by probability $\beta_3=0.1$.

Once we generate the dataset, we actually have aggregated values of category counts for each data source. Using these category counts, we can utilize our data quality assessment method we described in Section~\ref{subsec:methodology_infer}, and look into the difference between our computed predictive values, and the actual predictive values as given above.

\begin{figure}[!t]
\centering
\includegraphics[width=2.1in]{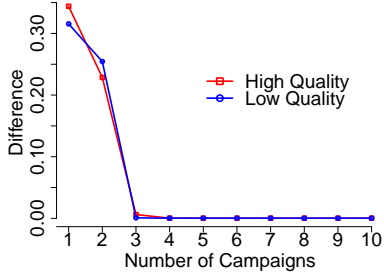}
\caption{Number of campaigns versus difference between predicted and actual precision values.}
\label{fig_diff_over_num_camp}
\end{figure}

\begin{figure}[!t]
\centering
\includegraphics[width=2.1in]{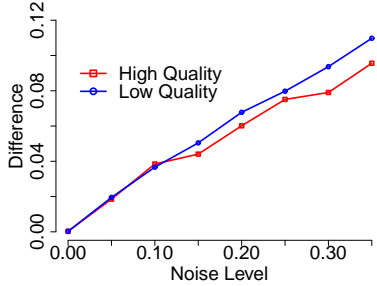}
\caption{Noise versus difference between predicted and actual precision values for six campaigns scenario.}
\label{fig_diff_vs_noise}
\vspace{-6pt}
\end{figure}

The results of the above described simulations are given in Figures \ref{fig_diff_over_num_camp} and \ref{fig_diff_vs_noise}. In Figure \ref{fig_diff_over_num_camp}, we estimate the nine predictive values for each of the data sources using our methodology, by utilizing multiple campaigns (the results are averaged over 100 trials at each value in x-axis). As we have proven in Section~\ref{subsec:methodology_infer}, we do need at least three campaigns to get a unique solution for all nine probabilities. We can observe that starting with four campaigns, the difference of the real values versus our predicted values fall to zero. We have plotted the difference between $\alpha_1$s (precision values, since $\alpha_1 = p(\textrm{actually positive}~|~\textrm{predicted positive})$), but the results are similar for other $\alpha_{2\rightarrow3}$, $\beta_{1\rightarrow3}$, and $\gamma_{1\rightarrow3}$.

Next, we performed another experiment where we introduced a uniform noise between $\pm \zeta$ (where we changed $\zeta$ between 0 and 0.35) to the above nine real predictive values, and then generated the ground truth assignments. We tried to recover the nine predictive values using six campaigns and present the difference (averaged over 100 trials) between real and predicted $\alpha_1$ values in Figure \ref{fig_diff_vs_noise}. We can see that even under significant noise levels, our methodology can recover the precision values accurately. Because $\alpha_1$ for the high quality data source is higher, we can also observe that the noise effect is slightly less.

{\bf Real-World Results}.
Following the methods we discussed in Section~\ref{subsec:methodology_setup_camp}, we ran 156 performance campaigns, each of which targeted a specific website. We used half of the campaigns to calculate $RelativeErr$ and predictive values for around 100 data sources. Then, we tried to estimate the positive population in the rest of campaigns using these 100 data sources. We utilized the average of positives predicted by the sources, and calculated the correlation with the ground truth positive sizes. For the direct inference method, it is clear that for each campaign $\mathcal{C}^i$, its estimated positive population is $G^i_+ \approx D^i_+\cdot \alpha_1 + D^i_- \cdot \beta_1 + D^i_\bigcirc \cdot \gamma_1$ (for a single data source). For the $RelativeErr$ method, by deducing (\ref{relative_err_form}), we can roughly estimate $G^i_+= M \cdot \hat{\tau}  \hat{R} / (1 - \bar{\mu})$, where $M$ is the population size, $\hat{\tau}$ is the percentage of population recognized by the ground truth in the training set, and $\bar{\mu}$ is the average $R-\hat{R}/R$ (i.e. \emph{RelativeErr} from Eq.~\ref{relative_err_form}) of the training set for a single data source. 

Each method's Pearson correlation coefficient is shown in Figure \ref{fig_correlate}. The direct inference method gives a significantly more accurate estimate of the positive population ($p<.001$) and it correlates well with the ground truth.

\begin{figure}[tb]
\centering
\includegraphics[width=2.5in]{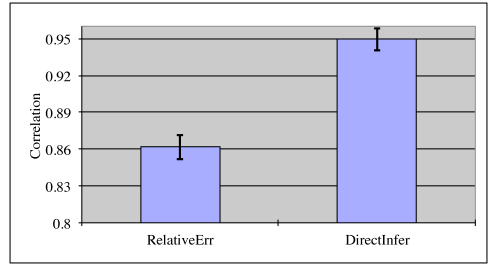}
\caption{Pearson correlations between the  estimated positive population and the ground truth.}
\label{fig_correlate}
\end{figure}

Next, we utilized two popular data sources $S1$ and $S2$ as the targeting criterion and ran test campaigns individually. The reported positive rates from the independent evaluation agency can be treated as the ground truth of their precisions. The ground-truth precisions and our estimated values are listed in Figure~\ref{fig_real_camp}. The direct inference method yields a much closer estimation to the ground truth ($p<.001$), while the ranking method preserves the orders but the values are substantially different from the ground truth.

\begin{figure}[tb]
\centering
\includegraphics[width=2.6in]{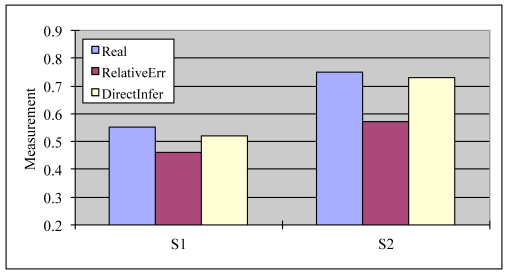}
\caption{Reported measurements on the positive population from different methods.}
\label{fig_real_camp}
\end{figure}

Our proposed approach was deployed into Turn's data management platform and generates weekly reports on the quality of our top data sources. We have received positive feedback from our campaign optimization managers in the field, commenting that the reported precisions are close to the real campaign results. Interestingly, by evaluating our data sources periodically, we are forming a positive reinforcement loop over their data quality: feeling the pressure, data providers work consistently to improve their data quality. For example, the estimated precision of one data source over a three month period is plotted in Figure~\ref{fig_trend}. It is clear that the data source's quality has been improved over this time period.

\begin{figure}[tb]
\centering
\includegraphics[width=2.6in]{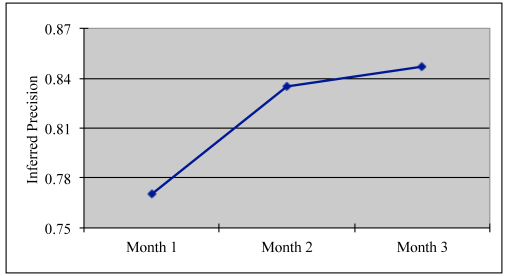}
\caption{Inferred precision of a data source over a three month period.}
\label{fig_trend}
\vspace{-6pt}
\end{figure}

\section{Conclusions and Future Work} \label{sec:conclusions}
In this paper, we have presented a novel framework to evaluate first or third-party data sources on user properties for online advertising, which is a particularly challenging task when the ground truth is reported in aggregate form. We call this problem \emph{data quality assessment}, and presented two solutions, one utilizing the data sources directly in a campaign, and another one, which utilizes outputs from multiple online advertising campaigns to optimize a set of probabilities which represent the ``goodness'' of a data source. We have also presented some use cases on how these evaluations can be utilized in online advertising domain, mainly in targeting, assessing the amount of money that an advertiser should pay for a data source, and forecasting. Some preliminary simulation and real-world results were also presented that show the effectiveness of our methodology, as well as some results on the performance of a well-established actual data provider for age categorization of users on multiple real-world advertising campaigns.

Possible future work mainly lies on the use cases of the evaluation output of our methodologies, as given in Section~\ref{sec:use_cases}. Our current focus is on accurate targeting of online users, which also needs to take into account the problem of combining multiple data sources and their quality assessments to come up with a better model.

\section*{Acknowledgments}
We thank many talented scientists and engineers at Turn for their help and feedback in this work.

\bibliographystyle{abbrv}
\bibliography{Geyik_TDQA}

\begin{thebibliography}{10}

\bibitem{adam_1989}
N.~R. Adam and J.~Wortmann.
\newblock Security-control methods for statistical databases: A comparative
  study.
\newblock {\em ACM Computing Surveys}, 21:515--556, 1989.

\bibitem{Casella02}
G.~Casella and R.~L. Berger.
\newblock {\em Statistical inference}, volume~2.
\newblock Duxbury Pacific Grove, CA, 2002.

\bibitem{chen_2006}
B.~Chen, L.~Chen, R.~Ramakrishnan, and D.~R. Musicant.
\newblock Learning from aggregate views.
\newblock In {\em Proc. IEEE ICDE}, 2006.

\bibitem{chen2010cheetah}
S.~Chen.
\newblock Cheetah: a high performance, custom data warehouse on top of
  mapreduce.
\newblock {\em Proc. VLDB Endowment}, 3(1-2):1459--1468, 2010.

\bibitem{cheplygina_2015}
V.~Cheplygina, D.~M.~J. Tax, and M.~Loog.
\newblock On classification with bags, groups and sets.
\newblock {\em Pattern Recognition Letters}, 59:11--17, 2015.

\bibitem{elmeleegy2013overview}
H.~Elmeleegy, Y.~Li, Y.~Qi, P.~Wilmot, M.~Wu, S.~Kolay, A.~Dasdan, and S.~Chen.
\newblock Overview of turn data management platform for digital advertising.
\newblock {\em Proc. VLDB Endowment}, 6(11):1138--1149, 2013.

\bibitem{ferri_2009}
C.~Ferri, J.~Hernandez-Orallo, and R.~Modroiu.
\newblock An experimental comparison of performance measures for
  classification.
\newblock {\em Pattern Recognition Letters}, 30:27--38, 2009.

\bibitem{gunawardana_2009}
A.~Gunawardana and G.~Shani.
\newblock A survey of accuracy evaluation metrics of recommendation tasks.
\newblock {\em Journal of Machine Learning Research}, 10:2935--2962, 2009.

\bibitem{jalali_2013}
A.~Jalali, S.~Kolay, P.~Foldes, and A.~Dasdan.
\newblock Scalable audience reach estimation in real-time online advertising.
\newblock In {\em Proc. ICDMW}, pages 629--637, 2013.

\bibitem{klee_2012}
K.-C. Lee, B.~Orten, A.~Dasdan, and W.~Li.
\newblock Estimating conversion rate in display advertising from past
  performance data.
\newblock In {\em Proc. ACM KDD}, pages 768--776, 2012.

\bibitem{musicant_2007}
D.~R. Musicant, J.~M. Christensen, and J.~F. Olson.
\newblock Supervised learning by training on aggregate outputs.
\newblock In {\em Proc. IEEE ICDM}, pages 252--261, 2007.

\bibitem{parker_2011}
C.~Parker.
\newblock An analysis of performance measures for binary classifiers.
\newblock In {\em Proc. IEEE ICDM}, pages 517--526, 2011.

\bibitem{hadoop_2012}
T.~White.
\newblock {\em Hadoop: The Definitive Guide}.
\newblock O'Reilly Media, Sebastopol, CA, 2012.

\bibitem{williams_2014}
M.~H. Williams, C.~Perlich, B.~Dalessandro, and F.~Provost.
\newblock Pleasing the advertising oracle: Probabilistic prediction from
  sampled, aggregated ground truth.
\newblock In {\em Proc. ACM ADKDD}, 2014.

\bibitem{yan_2009}
J.~Yan, N.~Liu, G.~Wang, W.~Zhang, Y.~Jiang, and Z.~Chen.
\newblock How much can behavioral targeting help online advertising?
\newblock In {\em Proc. ACM WWW}, pages 261--270, 2009.

\bibitem{yu_2015}
F.~X. Yu, K.~Choromanski, S.~Kumar, T.~Jebara, and S.~Chang.
\newblock On learning from label proportions.
\newblock {\em arXiv:1402.5902v2}, 2015.

\end{thebibliography}

\end{document}